\let\origvec\vec
\let\vec\origvec
\newtheorem{thm}{Theorem}
\newtheorem{lem}{Lemma}
\newtheorem*{ntn}{Notation}
\newtheorem*{rmk-2}{Remark}
\newtheorem{rmk-3}{Remark}
\newtheorem*{rmk-4}{Remark}
\newtheorem*{rmk-5}{Remark}
\newtheorem{rmk-6}{Remark}
\newtheorem{rmk-7}{Remark}
\newtheorem*{rmk-8}{Remark}
\newtheorem{cl}{Corollary}
\newtheorem{nt}{Note}
\begin{document}

\title{Interpretations of Deep Learning by Forests and Haar Wavelets}
%\subtitle{Do you have a subtitle?\\ If so, write it here}

%\titlerunning{Short form of title}        % if too long for running head

\titlerunning{Interpretations of Deep Learning by Forests and Haar Wavelets}

\author{Changcun Huang %\and
        %Second Author %etc.
}

%\authorrunning{Short form of author list} % if too long for running head

\institute{Changcun Huang \at
%              %Department of Computer Science and Technology, Guangzhou University Sontan College, Guangzhou, China \\
%%              Tel.: +86-17783093983\\
%%              %Fax: +123-45-678910\\
%%              \email{cchuang@mail.ustc.edu.cn}           %  \\
%%             \emph{Present address:} of F. Author  %  if needed
%           %\and
%%           S. Author \at
%%              second address
}

%\date{Received: date / Accepted: date}
% The correct dates will be entered by the editor

\maketitle

\begin{abstract}
This paper presents a basic property of region dividing of ReLU (rectified linear unit) deep learning when new layers are successively added, by which two new perspectives of interpreting deep learning are given. The first is related to decision trees and forests; we construct a deep learning structure equivalent to a forest in classification abilities, which means that certain kinds of ReLU deep learning can be considered as forests. The second perspective is that Haar wavelet represented functions can be approximated by ReLU deep learning with arbitrary precision; and then a general conclusion of function approximation abilities of ReLU deep learning is given. Finally, generalize some of the conclusions of ReLU deep learning to the case of sigmoid-unit deep learning.
\keywords{Deep learning \and Interpretation \and Region dividing \and Forests \and Haar wavelets \and Function approximation.}
\end{abstract}

\section{Introduction}
\label{intro}
Deep leaning is nearly the most popular highlight of artificial intelligence nowadays and has made great successes in speech recognition \cite{[1]}, computer vision \cite{[2]}, playing game go \cite{[3]}, and so on. Despite its successful applications and history of nearly 40 years since Fukushima's paper in 1982 \cite{[4]}, the underlying principle still remains unclear, so that deep learning is often referred to as ``black box'', which greatly hinders its development.

One of the main concerns is why deep neural networks are more powerful than those with shallow layers. The answer to this question is the key of understanding deep learning, for which there are mainly three kinds of existing results: The first kind is specific, explaining particular class of functions realized by deep learning, such as \cite{[5],[6],[7],[8]}. The second kind \cite{[9],[10],[11],[12],[13]} is more general by studying the expressive ability of deep layers compared with shallow ones. The last one is about the function approximation ability of deep learning \cite{[14],[15]}.

Part of this paper belongs to the second kind. We'll present a basic property of region dividing of ReLU \cite{[16],[17]} deep learning when successively adding new layers (Section 3); and then realize the multi-category classification via a deep learning structure equivalent to a forest, which is a new interpretation of ReLU deep learning (Section 4).

The function approximation problem is also discussed (Section 5). We'll prove that Haar wavelet represented functions can be approximated by ReLU deep learning as precisely as possible. It follows a general result that ReLU deep learning can approximate an arbitrary continuous function on a closed set of $n$-dimensional space; the proof is totally different from \cite{[14],[15]}, giving a new perspective of interpreting ReLU deep learning.

Finally, one distinction of this paper is that some conclusions of ReLU deep learning can be generalized to the case of sigmoid-unit deep learning (Section 6).

Since ReLU has nearly become the dominant choice of neural units used by deep learning in recent years \cite{[9],[18]}, the main topics of this paper are general and useful both in theory and engineering.

% 第二部分
\section{Preliminaries}
Before describing the region dividing property of deep learning, some preliminary results should be introduced first.
\subsection{Mechanisms of 3-layer networks} % 2.1
The discussion of 3-layer networks is the basis of comparisons between shallow and deep networks. And also, there exists 3-layer subnetworks in deep learning, in which the mechanism is the same as that of ordinary 3-layer networks.

We begin the discussion from a concrete example of two-category classification realized by a 3-layer network. It is well known that each ReLU corresponds to a hyperplane dividing the input space into two regions. In the case of two-dimensional input space, a hyperplane is reduced to a line. The following notes are applicable to the rest of this paper:
\begin{nt}
Hereafter, when referred to the classification by a hyperplane, the data points just being on hyperplanes are not taken into consideration.
\end{nt}
\begin{nt}
We'll not distinguish between the term of ``region dividing'' and that of ``data classification''; they usually do the same thing.
\end{nt}
\begin{nt}
For simplicity, all the figures of neural networks ignore the biases, which actually exist, however.
\end{nt}
\begin{nt}
Unless otherwise stated, the term of ``deep learning'' is the abbreviation of ``ReLU deep learning''.
\end{nt}

% 第1幅图
\begin{figure}[!t]
\captionsetup{justification=centering}
\centering
\subfloat[A 3-layer network.]{\includegraphics[width=1.7in]{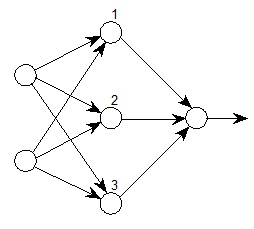}} \quad \quad
\subfloat[Region dividing with mutual interference.]{\includegraphics[width=1.75in]{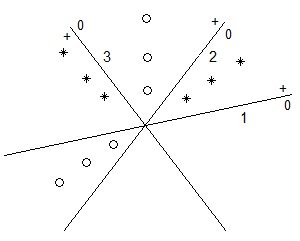}}
%\caption{Approximation of order 2 $LP$ compared with \protect\\ the differential of $f(x)$.}
\caption{ The mechanism of 3-layer networks.}
\label{Fig.1}
\end{figure}

Fig.\ref{Fig.1} (a) is a 3-layer network with three ReLUs in the hidden layer denoted by 1, 2 and 3, corresponding to lines of 1, 2, and 3 of Fig.1 (b), respectively.

\begin{ntn}
We denote the two different sides of a hyperplane by ``$l$-$s$'', where $l$ is the index of the hyperplane and $s$ expresses the output of the ReLU with respect to this hyperplane. $l$-+ represents one side of hyperplane $l$, where the ReLU output is greater than 0; and the other side is denoted by $l$-0, where the ReLU output is zero.
\end{ntn}

For instance, in Fig.\ref{Fig.1}(b), 1-+ is the side above line 1 because the data in that half plane gives positive ReLU output, while 1-0 represents the below side producing zero outputs. The objective of the 3-layer network of Fig.\ref{Fig.1} (a) is to classify the data points of Fig.\ref{Fig.1} (b) into two categories: the output of the third layer should be 0 or 1 when the input sample belongs to ``o'' or ``$\ast$'' category, respectively. Output 1 can be obtained by normalizing the nonzero output of the ReLU.

In Fig.\ref{Fig.1} (b), we can add lines of 1, 2, and 3 one by one for classification. First, Line 1 is added, when the ``o'' samples below line 1 are correctly classified. The samples above line 1 should be further classified by more lines, such as line 2 and line 3. However, for example, when line 3 is added, the ``o'' samples below line 1 is simultaneously in the side of 3-+, producing nonzero outputs; that is to say, the subdividing of the half plane above line 1 by line 3 makes the ever correct classification result below line 1 change to be wrong, for which we may need to add other lines to eliminate the influence of line 3.

In fact, the final output expression of 3-layer networks (with a single output) is
\begin{equation}
y = f(\sum_{i = 1}^{N}w_is_i),
\end{equation}
where $s_i$ is the $i$th ReLU output of the hidden layer. If $s_i \ne 0$ and $w_i \ne 0$, the $i$th ReLU can influence the whole sum $\sum{w_is_i}$ by its nonzero output; in geometry language, it means that the $i$th hyperplane for region dividing will influence half of the input space where this ReLU output is nonzero. The influenced region may include ever correctly divided regions and the right results may be reversed. If the influence cannot be eliminated by adjusting present hyperplanes, new hyperplanes should be added. This procedure may occur recursively; hence the number of hyperplanes needed in 3-layer networks may be extremely larger than that it really needs, when we just want to divide the input space into separated regions without considering mutual influences. This is the general explanation of Fig.\ref{Fig.1}, from which a conclusion follows:
\begin{thm}
 In 3-layer networks, any new added ReLU of the hidden layer will influence half of the input space where the output of this ReLU is nonzero.
\end{thm}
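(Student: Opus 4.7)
The plan is to read the claim directly off the definition of ReLU together with the additive structure of the output pre-activation. First I would fix notation: a newly added hidden unit computes $s_j(x) = \max(0,\, w_j^\top x + b_j)$, so its zero set is exactly the affine hyperplane $H_j := \{x : w_j^\top x + b_j = 0\}$, and $s_j$ is strictly positive on one open half-space of $\mathbb{R}^n \setminus H_j$ and identically zero on the other. This already identifies the ``half of the input space where the output of this ReLU is nonzero'' referred to in the statement.

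Next I would show that this half-space is precisely where the new unit can perturb the network output. Because the 3-layer network has output of the form $y = f\!\left(\sum_i w_i s_i\right)$ given in (1), appending the $j$th unit replaces the argument of $f$ by $\sum_i w_i s_i + w_j s_j(x)$. On $\{s_j(x) = 0\}$ this extra term vanishes pointwise, so the prior response of the network is preserved verbatim there, which explains why no previously assigned label on that side can change. On the open half-space $\{s_j(x) > 0\}$, provided the outgoing weight $w_j \neq 0$ (the only interesting case, since otherwise the unit is effectively absent), the added quantity $w_j s_j(x)$ is a strictly signed, linearly growing perturbation of the pre-activation, so the argument of $f$ is shifted at every point of that half-space.

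The one step I expect to need a little care is pinning down what ``influence'' means operationally. The perturbation $w_j s_j(x)$ is not constant on the half-space but grows linearly with the distance to $H_j$, so I would argue that because this shift is strictly signed throughout the entire open half-space, it cannot be cancelled globally by the other terms $\sum_i w_i s_i$; in particular any previously correct region assignment that happened to lie in $\{s_j(x) > 0\}$ can be reversed for suitable $w_j$, matching the picture described around Fig.~\ref{Fig.1}(b) where line~$3$ disturbs the classification of the ``$\circ$'' samples already separated by line~$1$. The dichotomy in the theorem then follows: no effect on one side of $H_j$, genuine effect on the other. No induction or auxiliary construction is required, as the assertion is purely a geometric reading of the support of ReLU against the linear combiner feeding the output.
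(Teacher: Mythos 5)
Your proposal is correct and follows essentially the same route as the paper: the paper likewise reads the claim directly off the additive output form $y = f\bigl(\sum_i w_i s_i\bigr)$, observing that the new unit's term vanishes on its zero half-space and perturbs the sum wherever $s_i \neq 0$ and $w_i \neq 0$. Your only overreach is the remark that the signed perturbation ``cannot be cancelled globally by the other terms'' (two units sharing a hyperplane with opposite outgoing weights would cancel exactly), but the paper's own argument makes no stronger claim than yours, so this does not affect the verdict.
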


We shall show that the interference of hyperplanes to each other can be avoided in deep learning.

\subsection{Transmitting of input-space regions through layers} %2.2
In deep learning, the input space is only directly connected to the first hidden layer; how a region of the input space passes to subsequent layers is a key foundation of subregion dividing via a sequence of layers.

Pascanu et al. \cite{[9]} used ``intermediate layer'' to transmit an input-space region, which is actually by means of affine transforms; however, no general rigorous conclusions with proofs were presented. Although trivial in mathematics, due to great importance, we'll give detailed descriptions rigorously both in the conclusions and proofs about this problem, as well as add some necessary prerequisites for the establishing of the results.

\begin{lem}
Suppose that the input space $I$ is n-dimensional. The $n$ nonzero outputs of n ReLUs in the first hidden layer form a new space $H$. If the weight matrix $W$ of size $n \times n$ between the input layer and the first hidden layer is nonsingular, then $H$ is n-dimensional and is an affine transform of a region of $I$. The intersection of the nonzero-output areas of $n$ ReLUs in $I$ is the region to be transformed.
\end{lem}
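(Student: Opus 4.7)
The plan is to exhibit the map from the input layer to the first hidden layer explicitly and show that, when restricted to the region where all ReLUs are simultaneously active, it reduces to a single affine bijection. First I would fix notation: write $s_i = \mathrm{ReLU}(w_i^{\mathsf{T}} x + b_i)$, where $w_i$ is the $i$th row of $W$ and $b_i$ the corresponding bias, so that the hidden-layer vector is $s = (s_1,\ldots,s_n)^{\mathsf{T}}$. Define
\begin{equation}
R \;=\; \bigcap_{i=1}^{n}\bigl\{\, x \in I : w_i^{\mathsf{T}} x + b_i > 0 \,\bigr\},
\end{equation}
which is precisely the intersection of the $n$ nonzero-output areas referred to in the statement.

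Next I would observe that, by definition of ReLU, for every $x \in R$ we have $s_i = w_i^{\mathsf{T}} x + b_i$ for each $i$, so the restriction of the layer map to $R$ is the affine map $x \mapsto Wx + b$. Since $W$ is nonsingular, this affine map is a bijection of $\mathbb{R}^n$ onto itself; hence its restriction to $R$ is a bijection onto the image set $H$, and $H$ is an affine transform of $R$, which is exactly the second half of the claim.

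The step requiring the most care is showing that $H$ has full dimension $n$, and for this I must check that $R$ itself is $n$-dimensional. My plan is to pick any strictly positive target vector $t \in \mathbb{R}^n$ and set $x_0 = W^{-1}(t - b)$; nonsingularity of $W$ makes this well-defined and gives $w_i^{\mathsf{T}} x_0 + b_i = t_i > 0$ for every $i$, so $x_0 \in R$. Because each defining inequality is strict and continuous in $x$, $R$ contains an open neighborhood of $x_0$, hence is a nonempty open subset of $I$ and therefore $n$-dimensional. Applying the bijective affine map $x \mapsto Wx + b$ then transports this $n$-dimensional open set onto $H$, proving that $H$ is $n$-dimensional.

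The main obstacle, if any, is not a technical one but conceptual: one must be careful that ``new space $H$'' is read as the image of $R$ under the piecewise-linear layer map and not as the full range of the ReLU composition over all of $I$ (which would include degenerate lower-dimensional pieces coming from inputs where some $s_i$ vanish). Once $R$ is isolated as the correct domain, the nonsingularity of $W$ does all of the remaining work, and the lemma follows from elementary linear algebra together with the openness argument above.
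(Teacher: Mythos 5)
Your proposal is correct and follows essentially the same route as the paper: on the region where all $n$ ReLUs are active the layer map reduces to the affine map $x \mapsto Wx + b$, and nonsingularity of $W$ gives that the image is $n$-dimensional. The only difference is that you explicitly verify the active region is a nonempty open set (via $x_0 = W^{-1}(t-b)$ for a positive target $t$), a point the paper dismisses as obvious.
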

\begin{proof}
We know that the nonzero output of a ReLU is $f(x) = x $ for $x > 0$. So an $n$-nonzero output vector $\vec{y}$ of $H$ can be written as
\begin{equation}
\vec{y} = W\vec{x} + \vec{b},
\end{equation}
where $\vec{x}$ is a vector of a certain region of $I$ and $\vec{b}$ is the bias vector of the $n$ ReLUs. (2) only combines the outputs of $n$ ReLUs to the matrix form. Obviously, (2) is an affine transform and if $W$ is nonsingular, the dimension of $H$ would be $n$.
\end{proof}

\begin{rmk-2}
The geometric meaning of Lemma 1: Nonsingular $W$ of {\rm (2)} implies non-parallel hyperplanes. Lemma 1 is equivalent to say that if the $n$ hyperplanes with respect to $n$ ReLUs are not parallel to each other, the space $H$ would be $n$-dimensional as well as an affine transform of a region of the input space.
\end{rmk-2}

\begin{thm}
In deep learning with $n$-dimensional input, if each succeeding layer has $n$ ReLUs with nonsingular input weight matrix, a certain region of the input space can be transmitted to subsequent layers one by one in the sense of affine transforms.
\end{thm}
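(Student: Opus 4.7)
The plan is to proceed by induction on the layer index $k$, taking Lemma 1 as the base case $k=1$. Lemma 1 already gives an $n$-dimensional affine image $H_1$ of some region $R_1\subseteq I$, namely the intersection in $I$ of the nonzero-output half-spaces of the $n$ first-layer ReLUs; on $R_1$ we have $\vec{y}_1 = W_1\vec{x}+\vec{b}_1$ with $W_1$ nonsingular.

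For the inductive step, I would suppose that on some region $R_k\subseteq I$ the nonzero-output vector of layer $k$ satisfies $\vec{y}_k = A_k\vec{x}+\vec{c}_k$, where $A_k$ is a nonsingular $n\times n$ matrix. The key idea is to treat the $n$-dimensional output space $H_k$ of layer $k$ as a fresh ``input space'' for layer $k+1$, so that Lemma 1 can be reapplied verbatim. By hypothesis layer $k+1$ has $n$ ReLUs with nonsingular weight matrix $W_{k+1}$, so Lemma 1 yields $\vec{y}_{k+1} = W_{k+1}\vec{y}_k+\vec{b}_{k+1}$ on the preimage (inside $H_k$) of the intersection of the nonzero-output half-spaces of the layer-$(k{+}1)$ ReLUs. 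Substituting the inductive formula gives
\begin{equation}
\vec{y}_{k+1} = (W_{k+1}A_k)\vec{x} + (W_{k+1}\vec{c}_k + \vec{b}_{k+1}),
\end{equation}
which is an affine transform with nonsingular matrix $W_{k+1}A_k$, completing the induction.

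The main obstacle is not the algebra of composing affine maps, which is routine, but rather making sure that at every layer the region one is tracking is genuinely nonempty, so that the phrase ``a certain region'' in the statement is vacuously satisfied only in degenerate cases. I would handle this by defining $R_{k+1}$ to be the preimage under the composite affine map $\vec{x}\mapsto A_k\vec{x}+\vec{c}_k$ of the layer-$(k{+}1)$ all-positive cell; since $A_k$ is nonsingular this preimage is again an open convex polyhedron, and emptiness can be avoided by an appropriate choice of the biases $\vec{b}_{k+1}$ (equivalently, by shifting the layer-$(k{+}1)$ hyperplanes so that their common positive cell meets the image of $R_k$). With this caveat the theorem follows immediately from the inductive construction and the fact that the composition of affine maps is affine, and the composition of nonsingular linear maps is nonsingular.
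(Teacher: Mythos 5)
Your proof is correct and follows essentially the same route as the paper's: the paper also treats each hidden layer as a fresh input space and applies Lemma 1 recursively, with the caveat that the tracked region must stay in the nonzero parts of all $n$ ReLUs at every layer. Your version merely makes the induction, the composition of nonsingular affine maps, and the nonemptiness condition explicit, which the paper leaves informal.
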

\begin{proof}
The first hidden layer of Lemma 1 again can be considered as a new input space. By doing this recursively, a certain region of the initial input space can be transmitted to succeeding layers one by one in the sense of affine transforms, as long as this region is always in the nonzero parts of all the $n$ ReLUs in each layer.
\end{proof}

% 第三部分
\section{Basic properties of region dividing via deep learning}

Section 2 has mentioned the mechanism of 3-layer networks that adding a new ReLU in hidden layer would influence half of the input space. However, this disadvantage can be avoided in deep learning; we'll describe a basic property of region dividing of deep learning in Theorem 3, which is the basis of the whole paper. The proof of Lemma 2 will be referred to for several times in the following sections to construct different types of deep learning structures.

% 第2幅图
\begin{figure}[!t]
\captionsetup{justification=centering}
\centering
\subfloat[A deep learning network.]{\includegraphics[width=2.0in]{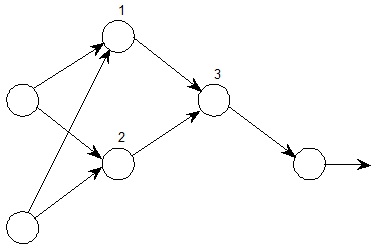}} \quad
\subfloat[Region dividing without mutual interference.]{\includegraphics[width=1.9in]{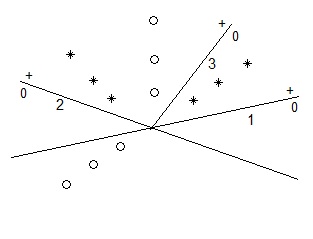}}
%\caption{Approximation of order 2 $LP$ compared with \protect\\ the differential of $f(x)$.}
\caption{The mechanism of deep learning.}
\label{Fig.2}
\end{figure}

% 3.1
\subsection{The two-dimensional case}
Also begin with an example. Fig.\ref{Fig.2} is corresponding to Fig.\ref{Fig.1} of Section 2. In Fig.\ref{Fig.1}, to subdivide the region above 1-+, line 3 is added in the hidden layer; however, this operation influences the ever correctly classified results. In deep learning, the mutual interference among lines can be avoided by adding a new layer to restrict the influencing area of a line.

As shown in Fig.\ref{Fig.2} (b), first, line 1 is selected to divide the data points into two separate parts in different regions; then, we can always find line 2 having the same classification effect as line 1. In fact, when line 2 in Fig.\ref{Fig.2} (b) rotates counterclockwise towards line 1, the region between 1-+ and 2-+ (or between 1-0 and 2-0) can be as large as possible, such that all the data points above line 1 (or below line 1) are encompassed by 1-+ and 2-+ (or by 1-0 and 1-0); this is the way of finding line 2. Thus, all the data points are either in the region between 1-+ and 2-+ (denoted by region-+) or in the region between 1-0 and 2-0 (denoted by region-0).
%, which is a two-category classification.

% Obviously, the similar above operation is applicable to the general case of $n$-dimensional space.
Since line 1 and line 2 are not parallel to each other, by the remark of Lemma 1, the space of the two nonzero outputs of ReLU 1 and ReLU 2 of the first hidden layer is two-dimensional, as well as an affine transform of region-+; while region-0 is excluded from this layer in terms of zero outputs. Affine transforms do not affect the linear classification property of the data; so the linear classification in region-+ can be done in the space of the first hidden layer, without influencing region-0 because it has been excluded.

Now, instead of adding ReLU 3 in the same layer as ReLU 1 and ReLU 2 in Fig.\ref{Fig.1} (a), we add it in a new layer called the second hidden layer as shown in Fig.\ref{Fig.2} (a) to perform the classification of the first hidden layer. Correspondingly, in Fig.\ref{Fig.2} (b), line 3 should be added in a region of the first hidden layer, which is an affine transform of region-+ of the input space; however, this illustration is reasonable because the effect of linear classification is equivalent.

%The left problem is the recursively region subdividing of the two regions. If region-0 only contains ``o'' data points (the case of Fig.\ref{Fig.2} (b)), we just need to subdivide the region-+. Otherwise, we can add two ReLUs which have opposite signs of ReLU 1 and ReLU 2, respectively, to subdivide region-0; correspondingly, two new lines will be added at the same locations as line 1 and line 2, respectively, but with opposite output signs. This procedure can be done recursively until all the data points are correctly classified.

Obviously, the principle and operation underlying this example are general in two-dimensional space. In what follows, we shall directly generalize it to the $n$-dimensional case.

%% 第3幅图
%\begin{figure}[!t]
%\captionsetup{justification=centering}
%\centering
%\includegraphics[width=2.4in]{5}
%\caption{An example of 3-category classification.}
%\label{Fig.3}
%\end{figure}

% 3.2
\subsection{The $n$-dimensional case}
%\begin{thm}
%Denote an arbitrary 3-layer sub-network of a deep learning with $n$-dimensional input by P-C-N, representing the preceding layer, current layer and next layer, respectively. For the classification of finite-number data points, adding certain ReLUs in layer $N$ can divide a certain region of layer $P$ in the sense of affine transform without influencing its other regions.
%\end{thm}

\begin{lem}
For a 3-layer network with $n$-dimensional input, the hidden layer can be designed to realize an arbitrary linearly separable classification of two categories. One of the category will be excluded by the hidden layer, while the other one changes into its affine transform. Adding a new hidden layer can divide a selected region of the input space in the sense of affine transforms without influencing an excluded region.
\end{lem}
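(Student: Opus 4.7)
The proof will lift the two-dimensional construction of Section 3.1 to $\mathbb{R}^n$, then invoke Lemma 1 and the layer-transmission idea of Theorem 2 for the final clause. Let $A$ and $B$ denote the two linearly separable categories, and let $H_1$ be a separating hyperplane with $A$ on its positive side and $B$ on its zero side. The strategy is to design the hidden layer to consist of $n$ ReLUs whose hyperplanes $H_1, H_2, \ldots, H_n$ all separate $A$ from $B$ in the same orientation, and whose normal vectors are linearly independent; this is the $n$-dimensional analogue of choosing line $2$ as a small rotation of line $1$ in Fig.\ref{Fig.2}(b).

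First I would fix $H_1$ and, using the fact that strict linear separability is preserved under sufficiently small perturbations of the separating hyperplane, choose $H_2, \ldots, H_n$ as small rotations of $H_1$ about $n-1$ independent directions in the tangent hyperplane of its normal. These rotations are chosen small enough that $A$ is contained in $\bigcap_i \{x : H_i(x) > 0\}$ and $B$ is contained in $\bigcap_i \{x : H_i(x) \le 0\}$, while the rotation axes are chosen so that the resulting $n$ normals are linearly independent, making the hidden-layer weight matrix $W$ nonsingular. Applying Lemma 1 to this configuration, the $n$ nonzero outputs form an $n$-dimensional space that is an affine image of the intersection region containing $A$, while every point of $B$ produces the all-zero output and is thereby excluded. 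This establishes the first two clauses.

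For the final clause, I would append a second hidden layer that takes the $n$ outputs of the first hidden layer as input. Since $B$ has been collapsed to the origin of the first layer's output space, every pre-activation in the second layer evaluates on $B$ to its bias alone, so no choice of weights and hyperplanes in the second layer can reintroduce any discrimination among points of $B$; the excluded region is in this sense not influenced. Conversely, any hyperplane placed inside the second-layer input space corresponds, via the inverse of the affine map supplied by Lemma 1, to a genuine hyperplane subdividing the selected (i.e.\ non-excluded) region of the original input space, which is precisely the affine-transform sense demanded by the statement.

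The main obstacle is the perturbation step used to construct $H_2, \ldots, H_n$: we must simultaneously preserve the separation property and secure linear independence of the normals. This is not deep — strict separation is an open condition in the space of hyperplanes, and linear independence is a generic (open and dense) condition on tuples of vectors — but it is the one place where the $n$-dimensional statement needs a genuine argument beyond the picture in Fig.\ref{Fig.2}. Everything else follows by direct application of Lemma 1 together with the recursive affine-transmission underlying Theorem 2.
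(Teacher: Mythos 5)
Your proposal follows essentially the same route as the paper: fix one separating hyperplane, obtain $n-1$ further hyperplanes realizing the identical two-category separation by small perturbations, observe that the $n$ normals can be made linearly independent so that Lemma~1 applies, and conclude that category-$+$ is transmitted as an affine image while category-$0$ is collapsed to the all-zero output and hence immune to any subsequent layer. One difference is worth recording, and it is in your favor. The paper obtains the remaining $n-2$ hyperplanes as hyperplanes lying ``between'' hyperplane~1 and the rotated hyperplane~2, i.e.\ as rotations about the same axis; read literally, all $n$ normals then lie in the two-dimensional span of the first two normals, so for $n>2$ the weight matrix $W$ would be singular and Lemma~1 would not apply. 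Your construction — perturbing $H_1$ about $n-1$ independent directions, using that strict separability is an open condition and linear independence a generic one — is exactly the repair needed, and you correctly identify this as the one point in the $n$-dimensional statement requiring an argument beyond the two-dimensional picture. The final clause is handled the same way in both: the excluded category sits at a single point of the hidden-layer space, so hyperplanes added in the next layer act only on the affine image of the selected region.
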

\begin{proof}
%We match layer $P$, layer $C$ and layer $N$ to the input layer, the first hidden layer and the second hidden layer of the two-dimensional example above, respectively.

%In the case of $n$-dimensional input space of layer $P$, we need $n$ hyperplanes (ReLUs) to construct the $n$-dimensional space of layer $C$, as well as to realise the two-category classification of data points just like the two-dimensional case. The function of $n$ hyperplanes to be constructed is corresponding to that of line 1 and line 2 in Fig.\ref{Fig.2} (b).

When the input space is $n$-dimensional, we need $n$ hyperplanes (ReLUs) to construct an $n$-dimensional space of the hidden layer, each of which realizes a same two-category classification. The function of those $n$ hyperplanes to be constructed is similar to that of line 1 and line 2 in Fig.\ref{Fig.2} (b).

First choose hyperplane 1 to divide the input space into two regions, containing the data points of category-0 and category-+, respectively; category-0 should be excluded, while category-+ may need to be subdivided. Then hyperplane 2 with the same classification effect as hyperplane 1 can be found by the similar method of the two-dimensional case. When hyperplane 2 rotates towards hyperplane 1 (counterclockwise or clockwise according to their relative positions), there exists infinite number of hyperplanes between them, all of which can classify the data in the same effect; choose $n-2$ of them as the remaining hyperplanes to construct an $n$-dimensional coordinate system. Since the $n$ selected hyperplanes are not parallel to each other, by the remark of Lemma 1, the $n$ nonzero outputs of $n$ ReLUs with respect to those hyperplanes form an $n$-dimensional linear space, which is an affine transform of a region of the input space; while the region giving $n$ zero outputs of the $n$ ReLUs will be excluded.

The constructed hidden layer has successfully excluded a region containing category-0 (region-0), as well as transmitted a region containing category-+ (region-+). If adding a new hidden layer, we can subdivide region-+ of the input space in the sense of affine transforms without influencing region-0.
%The intersection region (denoted by region-+) of nonzero-output areas of the $n$ hyperplanes can be transmitted to succeeding layers in the sense of affine transform for further subdividing. On the contrary, the region (denoted by region-0) where all the hyperplanes produce zero outputs will be excluded. This procedure can be applied to arbitrary two-category classification of finite-number data points.
\end{proof}

\begin{rmk-8}
The purpose of selecting $n$ non-parallel hyperplanes (ReLUs) is to construct an $n$-dimensional space to maintain the complete data structure of the $n$-dimensional input space in the sense of affine transforms. If the number of non-parallel hyperplanes is less than $n$, the outputs will be the subspace of the input space, which may lose information.
\end{rmk-8}

\begin{ntn}
Denote an arbitrary 2-layer subnetwork of a deep learning structure by $P$-$C$ with $n$-dimensional input, representing the previous layer and current layer, respectively; $W$ is the weight matrix between layer $P$ and layer $C$ as in (2).
\end{ntn}
\begin{thm}
In deep learning, if current layer $C$ has $n$ ReLUs with nonsingular input weight matrix $W$, adding ReLUs in a new layer $N$ after layer $C$ can divide a certain region of previous layer $P$ in the sense of affine transforms without influencing an excluded region. Similarly, adding new layers one by one can realize subregion dividing recursively; in each layer, data points that do not need to be subdivided can be put into the excluded region, so that the region dividing of succeeding layers will have no impact on them.
\end{thm}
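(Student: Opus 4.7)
The plan is to reduce the statement directly to Lemma 2 by re-reading the $C$-$N$ block as a 3-layer network whose ``input space'' is the output space of $C$, and then to iterate via Theorem 2.

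First I would apply Lemma 1 (and its geometric remark) to layer $C$: since $C$ carries $n$ ReLUs with nonsingular input weight matrix $W$, the nonzero-output vectors of $C$ form an $n$-dimensional space $H_C$ which is an affine image of the region $R_+\subset P$ where all $n$ of these ReLUs are simultaneously positive; the complementary region $R_0\subset P$ (where at least one of the $n$ ReLUs outputs zero) is not represented as a full-dimensional piece of $H_C$. Viewing $H_C$ as a fresh $n$-dimensional input space and $N$ as its hidden layer, Lemma 2 yields that the ReLUs of $N$ realize a further two-category classification of $H_C$, which under the affine bijection $R_+\to H_C$ is equivalent to a classification of $R_+\subset P$. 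This gives the first half of the theorem: adding layer $N$ subdivides a certain region of $P$ in the sense of affine transforms.

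Second, I would verify that $R_0$ is left alone. By construction, any $\vec{x}\in R_0$ produces a zero output vector at layer $C$, so its preactivation at every ReLU of $N$ reduces to that ReLU's bias alone; thus all of $R_0$ collapses to a single fixed point in the output space of $N$. Whatever hyperplanes $N$ employs inside $H_C$, the whole of $R_0$ is mapped to one fixed ReLU activation pattern, and the biases of $N$ can be chosen so that this common image always sits on the excluded side of every division performed in $N$. Consequently the subdivision done by $N$ does not alter the classification of $R_0$.

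For the recursive part of the statement I would iterate this construction by invoking Theorem 2: whenever each succeeding layer is built from $n$ non-parallel ReLUs with nonsingular weight matrix, the currently surviving region is transmitted forward as an affine image of itself, Lemma 2 applies again, and a further subregion can be declared excluded at that layer. The main obstacle I expect is exactly the propagation of ``inertness'' across many layers: once a region has been collapsed to a constant point at some layer, one must check that this point can be kept on the excluded side of \emph{every} division of \emph{every} subsequent layer. This is not deep, but it is the step where the recursion could fail if biases are chosen carelessly; handling it amounts to noting that a constant input yields a constant preactivation, so the biases of each later layer can be shifted by an arbitrary amount relative to that constant to place it on the zero-output side of any hyperplane one wishes.
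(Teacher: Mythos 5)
Your proposal is correct and follows essentially the same route as the paper, whose own proof simply states that the first claim is ``similar to Lemma 2'' and that the rest is its recursive application; you in fact supply more detail than the paper does, notably the observation that the excluded region has constant preactivation at every subsequent layer, so that biases can always be chosen to keep it on the zero-output side. One small wording slip: you define $R_0$ as the set where \emph{at least one} of the $n$ ReLUs of $C$ vanishes but then treat every point of $R_0$ as producing the all-zero output vector; as in Lemma 2, the excluded region should be the one where \emph{all} $n$ ReLUs output zero, the mixed-sign regions being arranged (by the choice of nearly coincident hyperplanes) to contain none of the data points of interest.
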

\begin{proof}
The first part of the theorem is similar to Lemma 2. As long as $W$ is nonsingular, even if the $n$ ReLUs of layer $C$ are not specially designed, the region-transmitting property still holds. The rest of the proof is the recursive application of the first part.
\end{proof}

\begin{rmk-7}
Theorem 3 is a basic property of region dividing of certain kinds of ReLU deep learning, which is the key of this paper; all the following results are the consequences of this theorem.
\end{rmk-7}

\begin{rmk-7}
Theorem 3 indicates an advantage of deep layers for a type of deep learning structure. To classify complex data points, the deeper the network, the finer the subdividing will be. Once the step of adding new layers stops, the last three layers will perform the classification via the mechanism of 3-layer networks in a ultimate subregion.
\end{rmk-7}

%% 第3幅图
%\begin{figure}[!t]
%\captionsetup{justification=centering}
%\centering
%\subfloat[A decision tree for two-category classification.]{\includegraphics[width=1.5in]{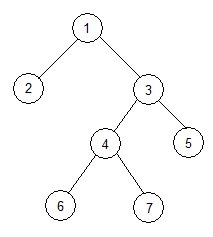}} \quad \quad
%\subfloat[The deep learning structure corresponding to the left decision tree.]{\includegraphics[width=2.4in]{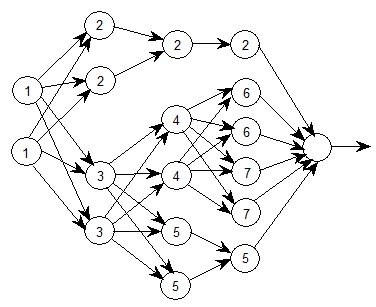}}
%%\caption{Approximation of order 2 $LP$ compared with \protect\\ the differential of $f(x)$.}
%\caption{Two-category classification ability of deep learning.}
%\label{Fig.3}
%\end{figure}

% 第四部分
\section{Multi-category classification of deep learning}
In this section, the multi-category classification ability of deep learning will be given in Theorem 4. Lemma 3 deals with the two-category case, while Theorem 4 is the repeated applications of Lemma 3.
\begin{lem}
For a finite number of data points composed of two categories in $n$-dimensional space, deep learning can classify them as a decision tree.
\end{lem}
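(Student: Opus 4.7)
The plan is to construct the deep learning network layer by layer, using Theorem 3 to realize one decision-tree split at each depth. Given the finite set $S$ of two-category data points in $\mathbb{R}^n$, I would first pick a hyperplane $h_1$ that linearly separates some nonempty subset $S_1 \subset S$ (homogeneous in category) from the rest; such a hyperplane always exists because one can, in the worst case, isolate a single extreme point of $S$ from the remaining finitely many points. Using the construction of Lemma 2, I choose $n-1$ auxiliary hyperplanes close to $h_1$ so that together they form $n$ non-parallel hyperplanes whose joint ``region-$0$'' contains exactly $S_1$ and whose joint ``region-$+$'' contains $S \setminus S_1$. The resulting first hidden layer excludes $S_1$ (outputs zero on it) and transmits the complement as an affine image to be processed further.

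Next I would iterate. Treating the first hidden layer as a new input space (Theorem 2 guarantees the affine transmission of $S \setminus S_1$), I repeat the same construction: find a hyperplane in the transformed coordinates that peels off another homogeneous subset $S_2$, use Lemma~2 to build the next hidden layer, and so on. Because $S$ is finite, the recursion terminates in at most $|S|-1$ steps; at termination, every data point has been excluded at exactly one depth, and the sequence of layerwise decisions traces out a binary tree whose leaves correspond to the successively excluded subsets $S_1, S_2, \ldots, S_k$. This is exactly the structure of a decision tree in which each internal node is a linear test and each leaf carries a single category label.

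To turn this layered exclusion process into a classifier, I would attach, at the depth where $S_i$ is excluded, a small output branch that fires a fixed value (encoding the category of $S_i$) precisely when the input lies in the excluded region of that layer, and is otherwise zero. These branch outputs can be summed at the final layer (with a last ReLU/normalization to produce the target label value as in equation (1) of the 3-layer discussion). Because the excluded regions at different depths are disjoint in the original input space, at most one branch contributes a nonzero value for any given input, so the overall network reproduces the decision-tree classifier.

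The main obstacle is the bookkeeping needed to show that the category tag of each excluded region can actually be read out by a ReLU sub-circuit without disturbing the region-transmission machinery of Theorem 3; I expect this to be routine once we notice that at the layer where $S_i$ is excluded we already have $n$ non-parallel hyperplanes whose joint zero-region is exactly the cell containing $S_i$, so a single additional ReLU summing those coordinates and thresholded appropriately fires only on that cell. A secondary (but minor) point is ensuring that at each recursion step a separating hyperplane with a homogeneous side exists, which follows because any single point of $S$ can be strictly separated from the remaining finite point set, giving a guaranteed split and hence termination.
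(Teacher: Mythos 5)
Your proposal is sound in spirit and rests on the same machinery (Lemma~2's $n$ non-parallel hyperplanes, Theorem~3's exclude-and-transmit recursion), but it takes a genuinely different route from the paper. You build a \emph{decision list}: at each depth you peel off one homogeneous subset into the zero region and transmit only the complement, so only one child of each split is ever subdivided, and the depth can grow to $|S|-1$. The paper instead emulates an \emph{arbitrary given} binary decision tree level by level: for each internal node it places $2n$ ReLUs in the next layer --- $n$ realizing the node's split as in Lemma~2, and $n$ more with all weights and biases sign-flipped, so that region-$+$ and region-$0$ are swapped and \emph{both} children's regions are transmitted in parallel for further subdivision. That sign-flip duplication is the key device absent from your construction; it is what lets the network depth equal the tree depth (rather than the number of leaves) and what underpins the later remarks and the forest construction of Theorem~4. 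Since a decision list is a special case of a decision tree, your argument does establish the lemma as stated, and your existence argument for each peel-off (strictly separating a convex-hull vertex) correctly guarantees termination. Two points you should still make explicit, both of which the paper handles: (i) each leaf's category signal must be carried forward through all remaining layers by a dedicated pass-through ReLU per layer whose weights keep its output nonzero, not merely ``summed at the final layer''; and (ii) once a subset is excluded (all-zero outputs), its image at later layers is the constant vector $\max(0,b_k)$, so the biases and readout branches at subsequent depths must be chosen so that these already-excluded points neither re-enter a transmitted region nor trigger another leaf's readout --- the paper sidesteps this via the mutual exclusivity of decision-tree leaves and the explicit per-leaf transmission units.
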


% 第3幅图
\begin{figure}[!t]


\captionsetup{justification=centering}
\centering
\subfloat[A decision tree for two-category classification.]{\includegraphics[width=1.3in]{5_1}} \quad \quad
\subfloat[The deep learning structure corresponding to the left decision tree.]{\includegraphics[width=2.2in]{5_2}}
%\caption{Approximation of order 2 $LP$ compared with \protect\\ the differential of $f(x)$.}
\caption{Two-category classification abilities of deep learning.}
\label{Fig.3}
\end{figure}

\begin{proof}
The proof is constructive by Lemma 2, Theorem 3 and the theory of decision trees. First, we can always construct a decision tree to realize this two-category classification, whose decision functions are linear classifiers. Second, there exists a deep learning structure equivalent to that decision tree, which is given by the following method.

As shown in Fig.\ref{Fig.3} (a), it's a four-level decision tree classifying two-dimensional data points and Fig.\ref{Fig.3} (b) is its corresponding deep learning structure. The layer of deep learning should correspond to the level of the decision tree except that the deep learning adds an output layer with one ReLU. Also in Fig.\ref{Fig.3}, the first layer, i.e., the input layer, has two ReLUs with respect to root node 1 of the decision tree because the input space is two-dimensional; the general case of $n$-dimensional space is similar.

In each layer, for the node having two child nodes, construct $2n$ ReLUs in the next layer: The $n$ of them (left child) separate the data points into region-+ and region-0, which are designed according to the decision function of this node by the method of Lemma 2; data points in region-+ can be subclassified by succeeding layers of child nodes without influencing region-0 excluded. The other $n$ ReLUs (right child) are different from the first group of $n$ ReLUs only in the parameter signs, respectively; they reverse the ReLU outputs of data points in region-+ and region-0, which instead makes region-0 to be subdivided. For example, in Fig.\ref{Fig.3}, node 1 has two child nodes, so that four ReLUs are needed in the next layer; two of them are for left child 2, while the other two are for right child 3. In the second layer, the weights and biases of ReLU 3's are opposite in the signs to those of ReLU 2's as well as with same absolute values, respectively.

For the leaf node, if the next layer is the last one, just connect its related ReLUs to the output ReLU, as node 6 and node 7 of Fig.\ref{Fig.3}. Otherwise, we should add one ReLU in each succeeding layer (except for the last one) to transmit the classification result to the last layer, such as node 2 and node 5 in Fig.\ref{Fig.3}; make sure that the weights and bias of the single ReLU of a leaf node in each layer maintain the nonzero output.

The weights and bias of the output-layer ReLU should be designed to distinguish between a left leaf node and a right leaf node. For instance, let the left leaf node and right leaf node of Fig.\ref{Fig.3} (a) correspond to zero output and nonzero output of deep learning of Fig.\ref{Fig.3} (b), respectively. The design is easy because in the layer previous the last one, when the output of a leaf node is nonzero, those of other leaf nodes will be mutually exclusive to be zero, due to the properties of decision trees. For example, in Fig.\ref{Fig.3} (b), when the output of ReLU 2 in the fourth layer (previous the last one) is nonzero, all the outputs of other ReLUs of this layer will be zero. So just consider that only ReLU 2 exists in that layer, by which the weight between ReLU 2 and the output-layer ReLU can be designed without influencing other leaf nodes. The bias of the output-layer ReLU can be set to zero because the weight itself is enough to produce the desired output. Since ReLU 2 is corresponding to a left leaf node, obviously, when the bias is zero, if the weight is set to a value less than or equal to zero, the design will meet the need. The general case is similar. This completes the constructing process.
\end{proof}

%% 第4幅图
%\begin{figure}[htbp]
%\captionsetup{justification=centering}
%\centering
%\includegraphics[width=2.4in]{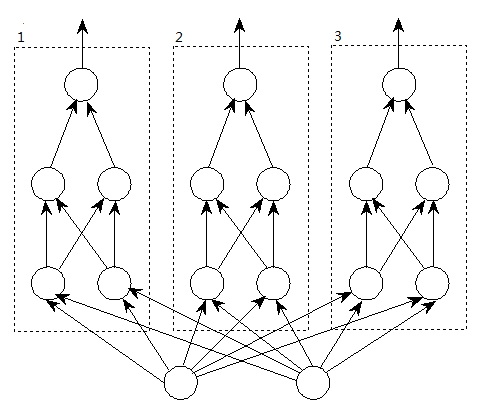}
%\caption{An example of three-category classification.}
%\label{Fig.4}
%\end{figure}

A forest is a sum of decision trees \cite{[19]}. We have proved that deep learning can realize the two-category classification as a decision tree. Next we'll show that deep learning can be equivalent to a forest in classification abilities.

% 第4幅图
\begin{figure}[!t]
\captionsetup{justification=centering}
\centering
\includegraphics[width=2.4in]{}
\caption{An example of three-category classification.}
\label{Fig.4}
\end{figure}

\begin{thm}
Deep learning can classify arbitrary multi-category finite number of data points as a forest.
\end{thm}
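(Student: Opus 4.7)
The plan is to reduce the multi-category problem to a collection of two-category problems, apply Lemma 3 to each, and then paste the resulting deep learning networks together in parallel so that a single deep structure simulates the entire forest.

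First, I would set up the reduction. Suppose the data points come in $k$ categories $C_1,\ldots,C_k$. For each $i$, form a two-category problem ``$C_i$ vs.\ the rest''. By Lemma 3, a decision tree $T_i$ classifying this two-category problem exists, and there is a deep learning structure $D_i$ equivalent to $T_i$ with some depth $d_i$ and an output ReLU whose value distinguishes the two classes. The collection $\{T_1,\ldots,T_k\}$ is exactly a forest performing the multi-category classification: on any input, at most one tree will report ``positive'' and it identifies the correct category.

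Next, I would build a single deep learning structure $D$ that realizes all $D_i$ simultaneously. Let $d=\max_i d_i$. Pad each $D_i$ with trivial single-ReLU passthrough layers (as already used in Lemma 3 for leaf-node transmission) until every $D_i$ has depth exactly $d$. Then, in each layer of $D$, place the ReLUs of $D_1,\ldots,D_k$ side by side, using a block-diagonal weight matrix across the trees so the $k$ pathways never interfere with one another. Since the original inputs are shared, only the first hidden layer needs to replicate the input weights across the $k$ blocks; after that the block-diagonal structure is preserved and, within each block, the weight matrix is nonsingular by Theorem 3 as invoked inside $D_i$. Thus the region-transmitting and noninfluencing-excluded-region properties carry over without change.

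Finally, I would design the output layer so that the $k$ tree outputs are encoded as distinct labels. Using $k$ output ReLUs and wiring the final (leaf-level) ReLUs of each $D_i$ into its own output unit exactly as in Lemma 3 gives a one-hot encoding of the category. Alternatively, as hinted by the example in Fig.~4, one may collapse these to a single output ReLU by choosing the outgoing weights $w_1,\ldots,w_k$ from the $k$ positive leaf nodes to map to $k$ prescribed distinct values; because the trees are mutually exclusive in their positive-output events, no two of the $w_i$ can fire simultaneously and the assignment is unambiguous.

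The main obstacle I anticipate is not the construction itself but verifying that the parallel juxtaposition preserves the hypotheses of Theorem 3 globally: the combined weight matrix between consecutive layers must remain nonsingular on the nonzero-output subspace of each block, and the padding layers must carry the ``excluded-region'' information through without ever reactivating a suppressed branch. Once the block-diagonal construction and the padding ReLUs are specified carefully (copying the single-ReLU transmission trick from the proof of Lemma 3), this check is routine, and the theorem follows.
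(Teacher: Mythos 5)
Your proposal is correct and follows essentially the same route as the paper: a one-versus-rest reduction to the two-category case of Lemma 3, with the resulting subnetworks combined in parallel into a single deep structure (the paper's Fig.~4). Your additional details on depth-padding, the block-diagonal weight structure, and the output encoding merely make explicit what the paper leaves implicit.
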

\begin{proof}
The proof is to reduce the multi-category classification to the two-category case of Lemma 3 \cite{[20]}. Fig.\ref{Fig.4} is an example of three-category classification, in which each dotted rectangle classifies one of the three categories using the two-category method of Lemma 3. No matter how many categories should be classified, employ the two-category method to deal with each category separately and combine them into a whole deep learning structure.
\end{proof}

\begin{rmk-3}
From the viewpoint of forests, the layer depth of deep learning corresponds to the level of decision trees comprising a forest; while the number of neural units in each layer is related to that of the nodes in the corresponding level of the decision tree.
\end{rmk-3}

\begin{rmk-3}
Bengio et al. \cite{[19]} stated that decision trees are not easily generalized to variations of the training data, while forests do not have this limitation. By Theorem 4, deep learning can realize the function of forests and its generalization ability can be assured.
\end{rmk-3}

%% 第4幅图
%\begin{figure}[!t]
%\captionsetup{justification=centering}
%\centering
%\includegraphics[width=2.4in]{3}
%\caption{An example of three-category classification.}
%\label{Fig.4}
%\end{figure}

% 第六部分
\section{Function approximations of deep learning}
There exists general results about the function approximation ability of 3-layer sigmoid-unit networks, such as Hecht-Nielsen \cite{[21]}, Cybenko \cite{[22]}, and Hornik et al.\cite{[23]}. Among them, Hecht-Nielsen's proof is constructive.

In the area of ReLU deep learning, Yarotsky \cite{[14]} and Liang et al. \cite{[15]} have also discussed such issues, both using similar methods. They first constructed deep learning structures to approximate polynomial functions; and then by Taylor series of smooth functions, the function approximation ability of deep learning was proved. Although the conclusions are assured, the deep learning structures they mentioned are constrained to certain types.

The proofs given blow are totally different from \cite{[14]} and \cite{[15]}, aiming at other types of deep learning structures, which will provide a new perspective of interpreting deep learning.

\begin{lem}
Any piecewise-constant function of Haar wavelets with finite number of building-block domains can be approximated by deep learning with arbitrary precision. If the input space is $n$-dimensional, $2n$ hidden layers are enough.
\end{lem}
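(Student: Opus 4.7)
The plan is to reduce the statement to the case of the indicator function of a single axis-aligned box. A piecewise-constant Haar wavelet function with finitely many building-block domains can be written as $f=\sum_{k=1}^{m}c_k\,\chi_{B_k}$, a finite linear combination of indicators of rectangular boxes $B_k\subset\mathbb{R}^n$. If each $\chi_{B_k}$ admits an approximating subnetwork of depth $2n$ with arbitrarily small error, I would run the $m$ subnetworks in parallel (sharing the same $2n$ layers, concatenating their ReLUs inside each layer) and combine their terminal activations in a single output neuron with weights $c_k$. Theorem 3 guarantees that the region-dividing done for one box does not interfere with the region-dividing done for another, since once a point has been pushed into an excluded region it remains there in all subsequent layers.

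To approximate a single $\chi_{B_k}$, I would apply the region-dividing machinery of Lemma 2 and Theorem 3 one bounding hyperplane at a time. The box $B_k$ is the intersection of $2n$ closed half-spaces, one for each of its bounding hyperplanes $H_1,\dots,H_{2n}$ (two per coordinate axis). In the first hidden layer, take $H_1$ as the separating hyperplane and, following the construction in Lemma 2, fill the layer with $n$ slightly rotated copies of $H_1$ so that the resulting weight matrix is nonsingular; this preserves the half-space containing $B_k$ as an affine image while placing its complement in the excluded region. In the second hidden layer, repeat the process with $H_2$, acting on the affine image produced by layer one. After $2n$ such layers, every point outside $B_k$ has been absorbed into the zero cone of some layer, and only an affine image of $B_k$ retains nonzero activations.

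A single output neuron then realizes the constant $c_k$ on the surviving affine region through an affine functional followed by a ReLU. The "arbitrary precision" enters because ReLU activations give ramp transitions rather than exact step indicators: near each bounding hyperplane there is a transition strip whose width scales like $O(1/K)$, where $K$ is a common scale factor applied to the separating weights (with biases adjusted). Sending $K\to\infty$ drives the $L^1$ error and the pointwise error on any compact subset of $B_k^{\circ}\cup(\mathbb{R}^n\setminus\overline{B_k})$ below any prescribed $\varepsilon>0$; summing the per-box errors controls the total approximation by $\sum_k|c_k|$ times the per-box error.

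The main obstacle is the tension between the axis-aligned structure of the Haar boxes and the nonsingular-weight-matrix hypothesis in Lemma 1 and Lemma 2. The $2n$ bounding hyperplanes of $B_k$ come in $n$ parallel pairs, so one cannot place two opposite faces of the box in the same layer without destroying the non-parallel condition; this is precisely why the construction commits to one face per layer (with $n-1$ auxiliary small rotations completing the frame) and thereby needs $2n$ layers rather than fewer. A secondary delicacy is verifying that the small rotations chosen to satisfy nonsingularity do not enlarge the transition strips in an uncontrolled way; since each rotation can be made arbitrarily close to the original face, the induced perturbation of the excluded region can be absorbed into the $O(1/K)$ error bound, completing the approximation argument.
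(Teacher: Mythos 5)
Your proposal follows essentially the same route as the paper: decompose the Haar function into finitely many axis-aligned boxes, devote one hidden layer to each of the $2n$ faces of a box using $n$ slightly-rotated near-parallel copies of the face hyperplane to keep the weight matrix nonsingular (Lemma 2), run the per-box subnetworks in parallel, and shrink the error by letting the auxiliary hyperplanes rotate toward the face. The only real difference is bookkeeping --- the paper bounds the error by the area of the excess region of the covering polygon outside the box times the maximum oscillation $\omega$ of $f$, whereas you phrase it as an $O(1/K)$ transition-strip width --- and your final step of ``realizing the constant $c_k$ by an affine functional followed by a ReLU'' glosses over the fact that the surviving activations are affine rather than constant on the box, a point the paper itself leaves at the word ``normalized.''
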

\begin{proof}
The proof is based on Lemma 2 and Theorem 3. First prove the two-dimensional case. For a Haar wavelet represented function $f(x_1, x_2)$ defined on a closed set $S$ with finite building-block domains, we can always divide its domains into rectangles (or squares, similar hereafter) with different sizes and locations, each having a constant value (maybe the same with some other rectangles) of the function. The basic idea is to approximate the function by deep learning in each rectangle as precisely as possible. Because the number of rectangles is finite, if the approximation error for each rectangle is arbitrarily small, then the deep learning approximation to the whole function will be arbitrarily precise. So we just need to prove the case of one rectangle.

% 第5幅图
\begin{figure}[!t]
\captionsetup{justification=centering}
\centering
\subfloat[An isolated-rectangle domain separated by deep learning.]{\includegraphics[width=1.8in]{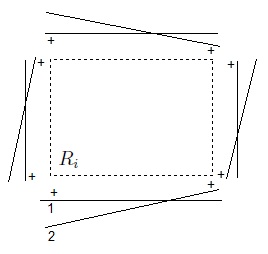}}
\subfloat[The case of adjacent rectangles.]{\includegraphics[width=1.8in]{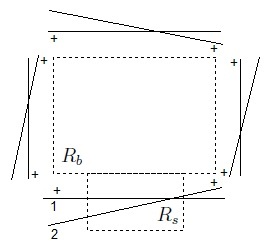}}
\subfloat[Corresponding deep learning structure.]{\includegraphics[width=1.0in]{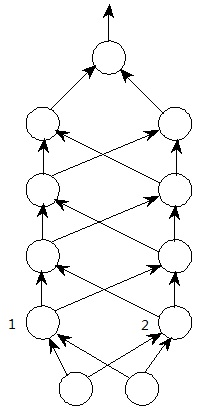}}
\caption{Function approximations of deep learning in one-rectangle domain.}
\label{Fig.5}
\end{figure}

First, for an isolated rectangle, such as $R_i$ in Fig.\ref{Fig.5} (a), it can be separated via deep learning. For each side of $R_i$, such as the bottom one, we can always find two lines (ReLUs) to divide some rectangle domains of $f(x_1, x_2)$ into two parts in two different regions, with one of the two lines parallel to the bottom side (such as line 1). $R_i$ is in the region where the outputs of the two ReLUs are both nonzero; all the rectangles below line 1 should be excluded by line 1 and line 2, and are in the other region (zero-output region). We see that the region between 1-+ and 2-0 or between 1-0 and 2-+ also gives nonzero output, which needs to be specially processed later different from the classification of discrete data points in Theorem 4.

After doing similar operations to the other sides, except for $R_i$, the rectangle domains of $f(x_1, x_2)$ are all excluded. However, the intersection of nonzero-output regions of the four separations is not $R_i$, but the region of the plane excluding four zero-output regions, which is a concave polygon (denoted by $P$) formed by eight lines such as in Fig.\ref{Fig.5} (a).

Note that only the separation of the bottom side of $R_i$ handled first is done in the input space of deep learning; the separations of three other sides should be done in the spaces of three succeeding layers, respectively, as shown in Fig.\ref{Fig.5} (c). However, the above operations are reasonable because of the properties of affine transforms. For example, if the second hidden layer of Fig.\ref{Fig.5} (c) corresponds to the separation of the left side of $R_i$, as long as we can find two lines for this side in the input space such as in Fig.\ref{Fig.5} (a),  the corresponding two lines in the space of the first hidden layer can also be found, with the parallel and collinear properties invariant. By the architecture of Fig.\ref{Fig.5} (c), the effects of four separations can be combined and finally only the data points in polygon $P$ can give nonzero outputs. The rest of the proof will not remind this related issue again.

In polygon $P$, let the output of deep learning be the value of the approximated function $f(x_1, x_2)$ in rectangle $R_i$. We now show that the limit of a sequence of $P$ can be $R_i$ by adjusting the parameters of eight lines. Denote an outer rectangle formed by four of the eight lines parallel to the four respective sides of $R_i$ (such as line 1) by $R_o$. For the separation of the bottom side of $R_i$, when line 2 rotates clockwise towards line 1 parallel to the bottom side, the limit of line 2 is line 1; during the rotating process, the classification effect of separating $R_i$ remains unchanged, while the region between 1-+ and 2-0 or between 1-0 and 2-+ becomes smaller and smaller. If we do the similar rotating operations to the cases of three other sides, the concave polygon $P$ can approximate $R_o$ by any desired precision.

When the outer rectangle $R_o$ shrinks to $R_i$, the polygon $P$ constructed by deep learning can also approximate $R_i$ with arbitrary precision; therefore, deep learning can approximate $f(x_1, x_2)$ in $R_i$ as precisely as possible.

Now discuss the case of adjacent rectangles. We call two rectangles adjacent when their two respective sides are on a same line. In Fig.\ref{Fig.5} (b), suppose that $f(x_1, x_2)$ has different constant values in the big rectangle $R_b$ and small rectangle $R_s$. As can be seen, the bottom side of $R_b$ shares a same line with the top side of $R_s$, so that they are adjacent. $R_b$ is to be separated and may have more than one adjacent rectangles; however, we only illustrate one of them, which is enough for the description of the proof.

As the case of an isolated rectangle, a concave polygon $P_b$ encompassing $R_b$ can be constructed by deep learning. In polygon $P_b$, the output of deep learning is normalized to the value of function  $f(x_1, x_2)$ in $R_b$. As shown in Fig.\ref{Fig.5} (b), part of $R_s$ is separated into $P_b$, where the output of deep learning is not equal to the actual function value in $R_s$. This type of approximation error occurs in all the adjacent rectangles separated into polygon $P_b$, where the function value is different from that of $R_b$. So the region of $P_b$ outside $R_b$ (denoted by $B$) is the source of approximation error of deep learning. Define the approximation error in $B$ as
\begin{equation}
E = \iint_{B'}(\hat{f}(x_1, x_2) - f(x_1, x_2)^2)^{1/2}dx_1dx_2,
\end{equation}
where $\hat{f}(x_1, x_2)$ is the approximating function of deep learning and $B'$ is a subset of region $B$ on which $f(x_1, x_2)$ is defined.

Let \begin{equation}
%\omega = \mathop{\max }\limits_S\left|f(x_1', x_2') - f(x_1, x_2)\right|
\omega = \max\limits_S\left|f(x_1', x_2') - f(x_1, x_2)\right|,
\end{equation}
where $S$ is the domain of $f(x_1, x_2)$ and $\omega$ is the maximum variation of $f(x_1, x_2)$, which always exists because $f(x_1, x_2)$ only has finite number of function values. Since the value of $\hat{f}(x_1, x_2)$ is also derived from $f(x_1, x_2)$, it's obvious that
\begin{equation}
E \le \omega S_B,
\end{equation}
where $S_B$ is the area of region $B$. Because the area of $P_b$ can be arbitrarily close to that of $R_b$, $S_B$ tends to be zero as $P_b \to R_b$; thus, $E$ can be as small as possible.

Fig.\ref{Fig.5} (c) is the structure of deep learning constructed for Fig.\ref{Fig.5} (a) or Fig.\ref{Fig.5} (b). The first hidden layer is corresponding to the region dividing by line 1 and line 2 with respect to the bottom side of a rectangle; and the succeeding three layers are the cases of three other sides. The four times of region dividing must be done in different layers successively to ensure that their effects can be combined. The final output should be normalized to the function value.

It is noted that four hidden layers for the 2-dimensional case are enough, since a rectangle has only four sides, each of which needs one hidden layer.

The whole structure of deep learning approximating $f(x_1, x_2)$ can be obtained by combining the subnetworks of all rectangle domains just like Fig.\ref{Fig.4}, each module of a dotted rectangle representing a certain rectangle domain of $f(x_1, x_2)$. This completes the proof of the two-dimensional case.

Similarly, the $n$-dimensional case can be proved. To approximate a single hyperrectangle, use $2n$ hidden layers (each for one of the $2n$ hyperrectangle sides) instead of four as in Fig.\ref{Fig.5} (c), with each layer having $n$ ReLUs. The rotating operations changing the parameters of hyperplanes can refer to the proof of Lemma 2. For each side of a hyperrectangle, $n$ hyperplanes are constructed to separate the hyperrectangle by the method of Lemma 2, with hyperplane 1 parallel to the side. Hyperplane 2 is second added and other $n-2$ hyperplanes are chosen between hyperplane 1 and hyperplane 2. So we just need to rotate hyperplane 2 as in the two-dimensional case, and then to insert other new $n-2$ hyperplanes between hyperplane 1 and the rotated hyperplane 2. The rest of the proof is trivial according to the two-dimensional case.
\end{proof}

\begin{thm}
Deep learning with $2n$ hidden layers can approximate any continuous function on a closed set of $n$-dimensional space with arbitrary precision.
\end{thm}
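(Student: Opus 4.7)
The plan is to reduce Theorem 5 to Lemma 4 via a standard two-stage approximation argument: first approximate the target continuous function uniformly by a piecewise-constant Haar-wavelet function, then approximate that Haar-wavelet function to within any desired precision by a ReLU deep network as guaranteed by Lemma 4. The triangle inequality will then control the total error.

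First I would fix a continuous target $f$ on a closed (hence compact, after restricting to a bounded domain of interest) subset $S \subset \mathbb{R}^n$ and an error tolerance $\varepsilon > 0$. Since $f$ is uniformly continuous on $S$, there exists $\delta > 0$ such that $|f(\vec{x}) - f(\vec{y})| < \varepsilon/2$ whenever $\|\vec{x} - \vec{y}\| < \delta$. Partition a bounding hyperrectangle of $S$ into finitely many closed axis-aligned hyperrectangles $\{R_i\}_{i=1}^{M}$ of diameter less than $\delta$, and define the piecewise-constant function
\begin{equation}
g(\vec{x}) = \sum_{i=1}^{M} c_i\,\mathbf{1}_{R_i}(\vec{x}),
\end{equation}
where $c_i = f(\vec{x}_i)$ for some chosen representative $\vec{x}_i \in R_i \cap S$ (skipping hyperrectangles that do not meet $S$). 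By construction, $g$ is a Haar-wavelet represented function with finitely many building-block domains, and $|f(\vec{x}) - g(\vec{x})| < \varepsilon/2$ for every $\vec{x} \in S$.

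Next I apply Lemma 4 to $g$: there exists a ReLU deep network $\hat{g}$ with $2n$ hidden layers such that $|g(\vec{x}) - \hat{g}(\vec{x})|$ is arbitrarily small outside a negligible error region near the rectangle boundaries. Here I follow the same construction as in Lemma 4, combining the $M$ single-rectangle subnetworks in parallel, exactly as in the multi-category architecture of Fig.~\ref{Fig.4} (one dotted-rectangle module per $R_i$); this widens the network but preserves the depth of $2n$ hidden layers. Choosing the rotation parameters so that each polygonal approximation $P_i$ to $R_i$ satisfies $\omega\, S_{B_i} < \varepsilon/(2M)$ (with $\omega$ and $S_{B_i}$ as in Lemma 4), summing the per-rectangle error bound (5) over $i$ yields $\|g - \hat{g}\|_{L^1(S)} < \varepsilon/2$, and the construction can equally be tightened to give an $L^\infty$ bound on any subset excluding an arbitrarily thin neighborhood of the rectangle faces. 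Then by the triangle inequality
\begin{equation}
|f(\vec{x}) - \hat{g}(\vec{x})| \le |f(\vec{x}) - g(\vec{x})| + |g(\vec{x}) - \hat{g}(\vec{x})| < \varepsilon
\end{equation}
on the relevant region of $S$.

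The main obstacle I anticipate is bookkeeping the error on the thin boundary strips between adjacent hyperrectangles, where the ``outer polygon'' $P_i$ genuinely overlaps neighboring cells. Lemma 4 already confronts this issue and handles it via the $E \le \omega S_B$ bound in (5), so the task is simply to verify that the same rotation-of-hyperplanes trick applied simultaneously to all $M$ modules can drive the total boundary contribution below $\varepsilon/2$, which is straightforward since $M$ is finite and the parameters of each module can be tuned independently. A secondary point to check is that parallel combination of the $M$ subnetworks really does preserve depth $2n$: this is immediate from the architecture of Fig.~\ref{Fig.5}(c) together with the Fig.~\ref{Fig.4} style merging, since each module already has exactly $2n$ hidden layers and the output layer simply sums their normalized outputs.
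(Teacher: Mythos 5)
Your proposal is correct and follows essentially the same route as the paper's (very terse) proof: approximate the continuous function uniformly by a finite piecewise-constant Haar-wavelet function, then invoke Lemma 4 and combine the two errors by the triangle inequality. The paper states this in two sentences; your version merely supplies the uniform-continuity, depth-preservation, and boundary-strip bookkeeping details that the paper leaves implicit.
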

\begin{proof}
We know that Haar wavelets are capable of approximating continuous functions, while deep learning can approximate Haar wavelets as demonstrated in Lemma 4. This completes the proof.
\end{proof}

\begin{rmk-6}
From the perspective of Haar wavelets, the $2n$ hidden layers of deep learning are used to approximate hyperrectangle domains of Haar wavelet functions by the principle of Theorem 3; and the number of neural units in each layer corresponds to that of the hyperrectangle domains. In more detail, if there exists $m$ hyperrectangle domains, each hidden layer should use $m \times n$ neural units.
\end{rmk-6}

\begin{rmk-6}
The approximating accuracy of deep learning by Theorem 5 is determined by that of Haar wavelets. If we want the approximation error to be smaller, just make the Haar wavelet approximation to a function more precise, and then use deep learning to approximate the finer wavelets. The precise the Haar wavelets, the more ReLUs we need; however, the number of hidden layers is always $2n$.
\end{rmk-6}

\begin{rmk-6}
Lippmann \cite{[24]} ever gave a little similar proof about the classification ability of 3-layer networks composed of threshold logic units (TLUs). Although he didn't mention the function approximation problem, his region dividing by neural networks can accurately represent a Haar wavelet function. However, he only discussed the case of 3-layer networks with TLUs.
\end{rmk-6}

% 第七部分
\section{Generalizations to sigmoid-unit deep learning}
Deep learning with sigmoid neural units has been successfully used in speech analysis \cite{[1]} and computer vision \cite{[2]}. It will be shown later that some related conclusions of ReLU deep learning of this paper can be generalized to the sigmoid-unit case.

\begin{cl}
All the conclusions of this paper about ReLU deep learning still hold in the case of a modified ReLU, which is
%\begin{equation}
%f(x) = \max(a, kx + b),
%\end{equation}
%where a is an arbitrary nonzero constant compared to a = 0 of the ReLU case, and $k$ and $b$ are respectively the slope and position of the linear part of the modified ReLU, with $k > 0$ and $a = -b / k$.
\begin{equation}
f(x) = \max(0, kx + b),
\end{equation}
where $k$ and $b$ are real with $k > 0$.
\end{cl}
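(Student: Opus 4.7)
My plan is to observe that the modified ReLU $f(x)=\max(0,kx+b)$ with $k>0$ preserves the two structural properties that every argument in the paper actually uses: (i) it partitions its scalar input by the affine threshold $kx+b=0$ into a zero-output halfline $\{kx+b\le 0\}$ and a nonzero-output halfline $\{kx+b>0\}$, so the $l$-$+$ / $l$-$0$ notation and each unit's hyperplane are still well defined; and (ii) on the nonzero-output side it produces the affine function $kx+b$. Once both properties are confirmed, each lemma and theorem of the paper adapts by absorbing the built-in constants $k$ and $b$ into the surrounding weight matrix and bias.

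First, I would redo the calculation of Lemma~1 with the modification in place. Let $K=\mathrm{diag}(k_1,\dots,k_n)$ with all $k_i>0$ and let $\vec{b}_{\mathrm{mod}}$ collect the built-in offsets. The $n$-vector of nonzero outputs of the first hidden layer becomes
\begin{equation*}
\vec{y}=K(W\vec{x}+\vec{b})+\vec{b}_{\mathrm{mod}}=(KW)\vec{x}+(K\vec{b}+\vec{b}_{\mathrm{mod}}),
\end{equation*}
which is again an affine transform. Because $K$ is nonsingular, $KW$ is nonsingular iff $W$ is, so the geometric conclusion of Lemma~1---that $H$ is an $n$-dimensional affine image of the intersection of the nonzero-output halfspaces---holds verbatim, and Theorem~2 then follows by the same recursion as before.

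Second, I would observe that the geometric constructions of Lemma~2 and Theorem~3 depend only on the freedom to realize an arbitrary affine hyperplane as the boundary of each unit and on the ability to freely choose $n$ non-parallel such hyperplanes in each layer. With $k>0$ fixed per unit, the pre-activation equation $k(\sum w_i x_i+b_W)+b=0$ still specifies an arbitrary affine hyperplane, since for any prescribed normal direction and offset one may scale the weights by $1/k$ and absorb the built-in offset into $b_W$. The same remark legitimizes the sign-reversal trick in Lemma~3 that produces the right-child unit by negating the weights of the left-child unit: negating $w_i$ reverses the normal direction of the hyperplane while leaving the sign of $k$ untouched. Every downstream construction---Theorem~4, and also Lemma~4 and Theorem~5 with their rotations of hyperplanes toward line~1---is a concatenation of these affine manipulations together with output normalization, so each transfers without change.

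The hard part is not a geometric obstacle but a bookkeeping one: I must confirm that the output-layer ReLU can still be normalized to any prescribed value, and that the single-unit transmission ReLUs inserted along leaf-node paths in Lemma~3 can still be tuned to maintain a nonzero output throughout. Both reduce to the observation that on the nonzero-output side the modified ReLU realizes a surjective parametrization of affine functions of the pre-activation (the triple $(k\vec{w},kb_W+b)$ ranges over all admissible affine maps as $\vec{w}$ and $b_W$ vary, for fixed $k>0$), so the same weight and bias choices used in the original proofs, rescaled by $1/k$ and shifted by $-b/k$, deliver the desired outputs.
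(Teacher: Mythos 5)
Your proposal is correct and takes essentially the same route as the paper, which argues in one line that any unit with a zero output and a linear output separated by a threshold supports all the ReLU-based constructions; you simply make this explicit by absorbing $k$ and $b$ into the surrounding weights and biases and checking nonsingularity of $KW$. The added detail (the diagonal rescaling, the hyperplane realizability, and the adjusted bias for the sign-reversal trick) is a faithful elaboration rather than a different argument.
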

\begin{proof}
(6) only changes the slope of the linear part and the position in $x$ axis of a ReLU; however, as long as a neural unit  has zero and linear outputs separated by a threshold, all the proofs related to the ReLU are applicable to the modified case of (6).
\end{proof}

\begin{cl}
In sigmoid-unit deep learning, a certain region of input space can be approximately transmitted to hidden layers by any desired precision in the sense of affine transforms.
\end{cl}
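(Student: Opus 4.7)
The plan is to mimic the ReLU argument behind Lemma 1 and Theorem 2, replacing the exact affine behavior of a ReLU on its nonzero half-space with a controlled first-order Taylor approximation of the sigmoid. The key observation is that $\sigma$ is smooth and $\sigma'(t_{0})\neq 0$ at every point; hence if the pre-activation of a sigmoid unit is constrained to lie in a small neighborhood of a fixed point $t_{0}$, the unit acts affinely on its input up to an error that vanishes with the size of that neighborhood. This is the mechanism that will play the role of the identity $f(x)=x$ used in the ReLU setting.

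First I would fix a bounded region $R$ of the $n$-dimensional input space together with a target affine map $\vec{y}=A\vec{x}+\vec{c}$, supplied for instance by the nonsingular weight matrix appearing in Lemma 1. For a small scaling parameter $\epsilon>0$, I would set the weight matrix of the first hidden layer to $\epsilon W_{0}$ with $W_{0}$ nonsingular, and pick the bias $\vec{b}_{0}$ so that $\epsilon W_{0}\vec{x}+\vec{b}_{0}$ stays in an arbitrarily narrow neighborhood of a chosen point $\vec{t}_{0}$ for every $\vec{x}\in R$. A componentwise Taylor expansion then yields
\begin{equation}
\sigma(\epsilon W_{0}\vec{x}+\vec{b}_{0}) = \sigma(\vec{t}_{0})+\mathrm{diag}\bigl(\sigma'(\vec{t}_{0})\bigr)\bigl(\epsilon W_{0}\vec{x}+\vec{b}_{0}-\vec{t}_{0}\bigr)+\vec{r}(\vec{x}),
\end{equation}
with $\|\vec{r}(\vec{x})\|=O(\epsilon^{2})$ uniformly on $R$. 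By choosing $W_{0}$ and $\vec{b}_{0}$ so that $\mathrm{diag}(\sigma'(\vec{t}_{0}))\,\epsilon W_{0}$ and the constant offset match the prescribed $A$ and $\vec{c}$, the leading term becomes the desired affine image of $R$, and the first hidden layer realizes the affine transmission up to an error of order $\epsilon^{2}$.

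To propagate the construction further, I would argue by induction, treating the (approximately) affine image of $R$ in the first hidden layer as a new bounded input region and repeating the scaling-plus-Taylor procedure in the next layer, exactly as Theorem 2 does for the ReLU case. The main obstacle, and the only place where genuine care is needed, is the accumulation of error across layers: each layer injects an $O(\epsilon^{2})$ perturbation that the subsequent (nearly linear) layers amplify by a factor controlled by $\|\sigma'\|_{\infty}$ and the chosen weight norms. Because the depth is finite and all amplifying constants are explicit, for any target precision $\delta>0$ one can choose $\epsilon$ small enough that the cumulative error remains below $\delta$ on $R$. This yields the approximate affine transmission with arbitrary precision asserted by the corollary, and reduces Corollary 2 to the rigorous form of the heuristic that a sigmoid ``behaves like an affine map'' on any sufficiently small pre-activation window.
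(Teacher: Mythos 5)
Your proposal is correct and follows essentially the same route as the paper: both exploit the approximately linear behavior of the sigmoid on a small pre-activation window (the paper uses $S(x)\approx x/4+1/2$ near $x=0$ together with Corollary 1 and Theorem 2) and then transmit the region layer by layer. Your version simply makes explicit the weight rescaling needed to confine pre-activations to that window and the control of error accumulation over finitely many layers, which the paper leaves implicit.
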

\begin{proof}
The derivative of sigmoid function $S(x)$ is $S'(x) = S(x)(1 - S(x))$, tending to 1/4 when $x \to 0$ ; that is to say, $S(x)$ is approximately a line of $y = x/4 + 1 / 2$ as precisely as possible when $x$ is close enough to zero. Thus, a certain segment of the sigmoid function can be approximately considered as a line. According to Corollary 1 and Theorem 2, this corollary holds.
\end{proof}
\begin{rmk-4}
In the classic paper \cite{[25]} of artificial neural networks, Hopfield also referred to the ``linear central region'' of $S(x)$ at $x = 0$ and used this approximately linear property to transmit information between nonlinear neurons. The thought is similar; however, the details are different from the background of applications.
\end{rmk-4}

\begin{cl}
Sigmoid-unit deep learning can exclude a certain region of the input space or a hidden layer space with arbitrary precision, so that region dividing in some other regions can not influence it.
\end{cl}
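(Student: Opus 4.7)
The plan is to emulate the exact zero-output exclusion of a ReLU by exploiting the saturation of the sigmoid $S(x)=1/(1+e^{-x})$: since $S(kx)\to 0$ as $k\to\infty$ for every fixed $x<0$, for any prescribed $\epsilon>0$ and any half-space lying strictly on one side of a chosen hyperplane, a sufficiently large multiplicative rescaling of the weights and bias of a sigmoid unit makes its output uniformly bounded above by $\epsilon$ on that half-space.

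First I would mirror the construction used in Lemma 2 and Theorem 3, replacing each ReLU in the "excluding" layer by a sigmoid whose weights $\vec{w}$ and bias $b$ are rescaled to $k\vec{w},\ kb$ for a large parameter $k$. On the region to be excluded (the analogue of region-0), every sigmoid in this layer then produces an output at most some $\delta(k)$ with $\delta(k)\to 0$ as $k\to\infty$. On the complementary region that should be transmitted, Corollary 2 already supplies an approximate affine transmission in the sense required by Theorem 3. Propagating the two error estimates through the remaining layers is then routine: the fixed weights and the sigmoid (whose derivative is bounded by $1/4$) make each subsequent layer Lipschitz with a constant $L$ depending only on the weights, so a deviation of size $\delta(k)$ on the excluded region becomes at most $L^m\delta(k)$ after $m$ further layers and can therefore be driven below any prescribed tolerance by choosing $k$ large enough. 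Consequently, any region-dividing carried out by later layers on the transmitted region causes only a negligible change to the network output over the excluded region.

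The main obstacle, and the point at which the argument departs from the ReLU case, is that sigmoid outputs are never literally zero, so "no influence" must be read in an $\epsilon$-$\delta$ sense. The delicate part is that the two approximations, namely saturation on the excluded side and linearization on the transmitted side, must be made jointly consistent by a single scaling choice: increasing $k$ sharpens the exclusion but simultaneously contracts the neighborhood of zero on which the sigmoid is nearly linear. This is reconciled by absorbing the factor $k$ into compensating inverse scalings in the next layer, exactly the device already used in Corollary 2, so that the transmitted region is mapped by its pre-activation affine transform into a set still lying inside the linear regime of the subsequent layer's sigmoids. With that balance in place, the statement follows by combining Corollaries 1 and 2 with the recursive region-exclusion mechanism of Theorem 3.
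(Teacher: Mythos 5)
Your proposal is correct and rests on the same core idea as the paper's own (two-sentence) proof: the saturation $S(x)\to 0$ as $x\to-\infty$ lets a suitably rescaled sigmoid emulate the zero-output side of a ReLU to within any $\epsilon$. You go well beyond the paper by explicitly handling the error propagation through later layers and the tension between sharpening saturation and preserving the near-linear regime needed for Corollary 2 — details the paper simply asserts away with ``selecting probable parameters.''
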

\begin{proof}
The sigmoid function $S(x)$ tends to zero as $x\to-\infty$, approximately corresponding to the zero-output part of a ReLU. Selecting probable parameters of sigmoid units can exclude a certain region as the case of ReLUs with arbitrary precision.
\end{proof}

\begin{rmk-5}
The above three corollaries suggest that sigmoid-unit deep learning can realize the function of ReLU deep leaning to some extent.
\end{rmk-5}

% 第八部分
\section{Summary}
The region-dividing property of deep learning in Theorem 3 is general. On the basis of this property, we established the relationships between deep learning and forests, as well as between deep learning and Haar wavelets, by which the multi-category classification and function approximation abilities of deep learning were discussed.

All topics mentioned above are related to the ``black-box'' problem of deep learning, which is important both in theory and engineering. We hope that this paper will be helpful to this theme.

%\begin{acknowledgements}
%If you'd like to thank anyone, place your comments here
%and remove the percent signs.
%\end{acknowledgements}

% BibTeX users please use one of
%\bibliographystyle{spbasic}      % basic style, author-year citations
%\bibliographystyle{spmpsci}      % mathematics and physical sciences
%\bibliographystyle{spphys}       % APS-like style for physics
%\bibliography{}   % name your BibTeX data base

% Non-BibTeX users please use
%
% and use \bibitem to create references. Consult the Instructions
% for authors for reference list style.
%
%\bibitem{RefJ}
%% Format for Journal Reference
%Author, Article title, Journal, Volume, page numbers (year)
%% Format for books
%\bibitem{RefB}
%Author, Book title, page numbers. Publisher, place (year)
%% etc

\end{document}